\def\supplementfilename{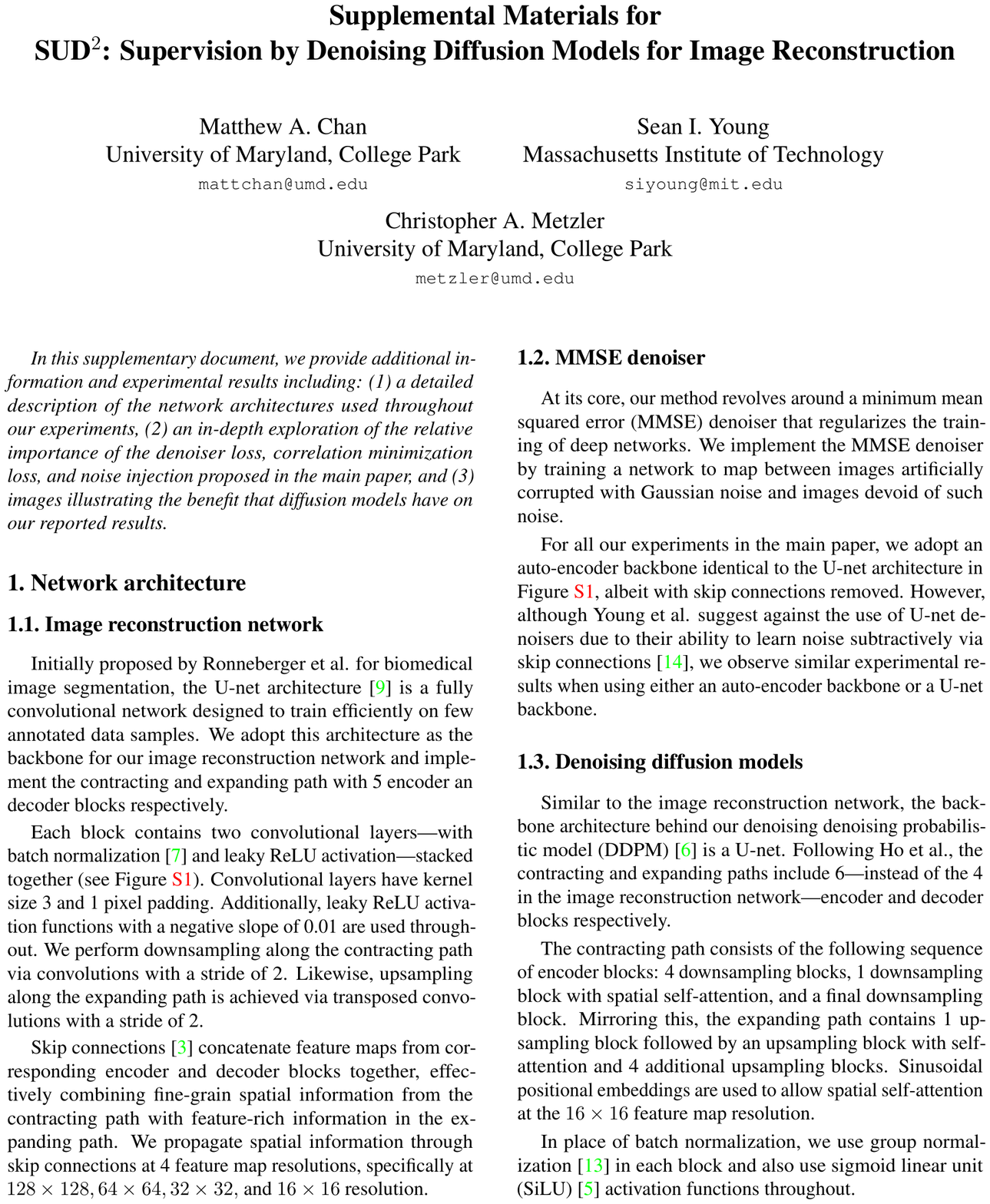}
\def\numbersupplementpages{\the\pdflastximagepages}
\newif\ifarXiv
\theoremstyle{plain}
\newtheorem{theorem}{Theorem}[section]
\newtheorem{corollary}[theorem]{Corollary}
\theoremstyle{definition}
\theoremstyle{remark}
\ificcvfinal\pagestyle{empty}\fi
\begin{document}

\title{SUD$^2$: Supervision by Denoising Diffusion Models for Image Reconstruction}

\author{Matthew A. Chan\\
University of Maryland, College Park\\
{\tt\small mattchan@umd.edu}
\and
Sean I. Young\\
Massachusetts Institute of Technology\\
{\tt\small siyoung@mit.edu}
\and
Christopher A. Metzler\\
University of Maryland, College Park\\
{\tt\small metzler@umd.edu}
}
\date{}

\maketitle
\ificcvfinal\thispagestyle{empty}\fi

\begin{abstract}
	Many imaging inverse problems---such as image-dependent in-painting and dehazing---are challenging because their forward models are unknown or depend on unknown latent parameters. While one can solve such problems by training a neural network with vast quantities of paired training data, such paired training data is often unavailable. In this paper, we propose a generalized framework for training image reconstruction networks when paired training data is scarce. In particular, we demonstrate the ability of image denoising algorithms and, by extension, denoising diffusion models to supervise network training in the absence of paired training data.\\
    \vspace{-10pt}
\end{abstract}

\section{Introduction}

Imaging inverse problems can generally be described in terms of a forward operator $\mathcal{F}(\cdot)$ that maps a scene $x$  to a measurement $y$ according to
\begin{equation}
    y = \mathcal{F}(x).
\end{equation}
The goal of an image reconstruction algorithm is to recover $x$ from $y$. 

Historically, computational imaging research has focused on solving inverse problems with known forward models. For instance, computed tomography's forward model can be represented as a Radon transform, and magnetic resonance imaging's forward model can be represented as 2D Fourier Transform. Knowledge of these forward models allows one to reconstruct scenes $x$ from measurements $y$ using any number of classical or learning-based algorithms~\cite{ongie2020deep}.

Since the onset of the deep learning era, significant progress has been made in solving inverse problems which lack explicit forward models. By leveraging large amounts of training pairs $\{x_i, y_i\}^N_{i=0}$, neural networks learn to directly map samples from a source distribution, $y_i$, to images from a target distribution, $x_i$. In doing so, the network implicitly learns the inverse operator $\mathcal{F}^{-1}$ without any explicit knowledge of the forward model $\mathcal{F}$. 

The main drawback of deep learning methods is that their performance is directly related to the size and quality of the training dataset. As a result, these methods often struggle whenever little to no paired training data is available. 

\subsection{Problem setup}

Our goal in this work is to train a network $f_\theta(\cdot)$ to reconstruct images/scenes $x$ from measurements $y$ using three sets of training data.

\begin{itemize}
\item A small set $P$ of paired examples $(x_p,y_p)$ drawn from the joint distribution $p_{x,y}$.
\item  A large set $U_y$ of unpaired measurements $y_u$ drawn from the marginal distribution $p_y$.
\item  A large set $U_x$ of unpaired images $x_u$ drawn from the marginal distribution $p_x$.
\end{itemize}
Such mixed datasets naturally occur in applications where gathering unpaired data is easy, but gathering paired data is a challenge. For instance, it is straightforward to capture images with fog and images without fog, but capturing two paired images of the same scene with and without fog (with all lighting conditions and other nuisance variations fixed) is very challenging. Often times, the latter paired dataset is restricted to only a few images captured in a lab.

A paired training set, $P$, allows one to optimize $f_\theta(\cdot)$ by minimizing the empirical risk
\begin{align}
\mathcal{L}_{\text{paired}}=\frac{1}{|P|}\sum_{(x_p,y_p)\in P}\|x_p-f_\theta(y_p)\|^2,
\end{align}
where $|P|$ denotes the cardinality of $P$. 
However, as the size of $P$ decreases $\mathcal{L}_{\text{paired}}$ becomes a poor approximation of the true risk and $f_\theta(\cdot)$ overfits to the training set. As an alternative, we seek to leverage unpaired datasets $U_x$ and $U_y$ to improve the quality of our reconstructions.

\subsection{Our contributions}

In this work we introduce a novel semi-supervised learning framework for addressing image reconstruction problems, such as image dehazing, which lack an explicit forward model and for which gathering paired training data is challenging. Our central contributions are as follows;%
\begin{itemize}
    \item We generalize the supervision-by-denoising (SUD) semi-supervised learning technique from~\cite{sud} to work on any image reconstruction task, not just medical image segmentation.
    \item We prove that SUD implicitly performs cross-entropy minimization and use this connection to identify its various failure modes.
    \item Based on our analysis, we identify three techniques to improve SUD: (1) sample correlation minimization, (2) noise injection, and (3) denoising diffusion models.
    \item We demonstrate that the resulting algorithm, which we call SUD$^2$, outperforms existing semi-supervised and unsupervised learning techniques, such as CycleGAN, on image in-painting and image dehazing.

\end{itemize}

\section{Related work}
\label{sec:related}

\subsection{Semi-supervised learning}
\label{sec:ssl}
While crowdsourcing platforms such as Amazon's Mechanical Turk~\cite{AMT} can generate abundant labeled data for human-annotatable tasks like image segmentation, it's all but impossible to hand-label training data for many imaging inverse problems. 
 Given a foggy image, how would one generate corresponding clean image to use for training? 
 Even in applications where hand-labeling is possible, it is often prohibitively expensive to perform at scale.

Semi-supervised learning (SSL) serves as a workaround for applying deep learning techniques in situations where paired data is scarce. Typically, SSL methods regularize network training by leveraging information extracted from unpaired data and generally fall into two distinct categories; pseudo-labelling methods and consistency regularization methods~\cite{ssl_survey}. Pseudo-labelling methods~\cite{pseudolabel} help supervise training by generating fake labels on unpaired data samples, which allows the training of networks in a fully-supervised manner. In contrast, consistency regularization methods like temporal ensembling~\cite{temporal_ensembling} and mean teacher models~\cite{mean_teacher} enforce a regularization objective during training that makes the network more robust to perturbations in the data.

 Our method borrows ideas from both categories of SSL methods. Similar to pseudo-labelling methods, we generate fake labels on unpaired data by incorporating a denoising diffusion probabilistic model (DDPM)~\cite{ddpm} into the training pipeline. Additionally, following consistency regularization methods, we impose a regularization objective that encourages our image reconstruction network to produces diverse outputs.

\subsection{Regularization-by-denoising}
\label{sec:red}
Image denoising algorithms (denoisers) can serve as powerful priors on the form and distribution of natural images~\cite{egiazarian2007compressed,venkatakrishnan2013plug,metzler2016denoising}. 
Regularization-by-denoising (RED)~\cite{red,red_clarifications} is a regularization technique that leverages an image denoiser to solve classical single-image imaging inverse problems. Given an known forward model $\mathcal{F}$ and a single measurement $y$, RED (under a white Gaussian prior on the measurement noise) tries to reconstruct the unknown scene $x$ by minimizing
\begin{equation}
    \arg\min_x\|y-\mathcal{F}(x)\|^2+\rho(x) ,
\end{equation}
where
\begin{align}
\label{eq:red}
    \rho(x) = \frac{1}{2}x^T\left[x-D(x)\right],
\end{align}
and $D(\cdot)$ represents a denoiser like BM3D~\cite{dabov2007image} or DnCNN~\cite{zhang2017beyond}. Variations on this idea have recently been combined with diffusion models as well~\cite{chung2023diffusion}.

One simplified interpretation of the RED regularization objective~\eqref{eq:red} is that a good denoiser $D(\cdot)$ will project $x$ onto the manifold $\mathcal{M}$ of natural images. 
If $x$ is far from that manifold $\rho(x)$ will be large, whereas if $x$ is already close to this manifold, $D(x)$ will change it very little and $\rho(x)$ will be small.

Similar to RED, our method leverages an image denoiser. However, instead of using the denoiser to regularize classical inverse imaging optimization objectives, we follow~\cite{sud} and use an image denoiser to regularize the training of deep neural networks. That is, we are using denoiser to regularize a {\em function} rather than an image. The resulting function/reconstruction network, unlike the RED, is non-iterative and does not require explicit knowledge of the forward model $\mathcal{F}$.

\begin{figure*}[ht]
\centering
\includegraphics[width=\textwidth]{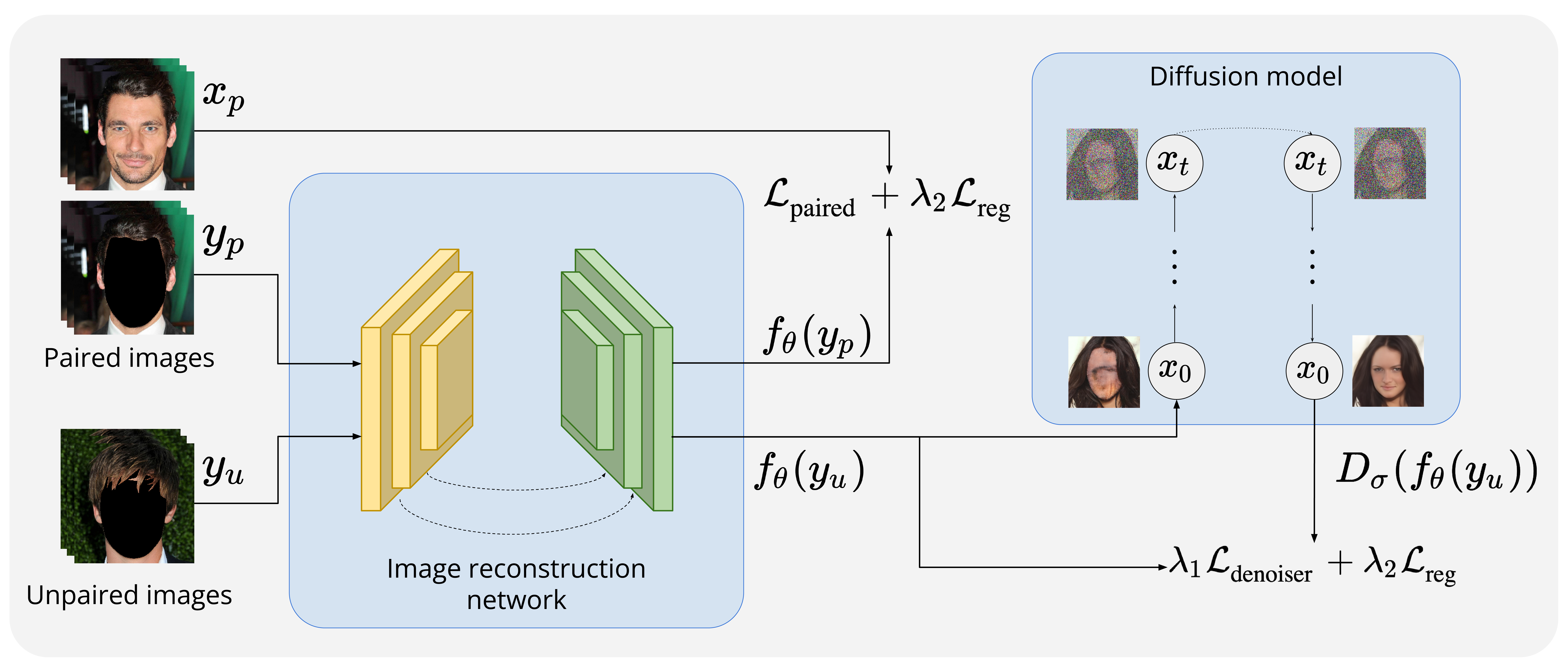}
\caption{{\bf Overview of the training pipeline.} Block diagram of the semi-supervised training pipeline used in our experiments. The pre-trained diffusion model supervises training by pushing outputs of the image reconstruction network towards the desired target image distribution. Note that only the image reconstruction network (a standard U-net in our experiments) is used at inference time.}
\label{ref:block_diagram}
\end{figure*}

\section{Methods}

\subsection{Supervision-by-denoising}

Young et al.~\cite{sud} recently introduced the supervision-by-denoising (SUD) framework which takes the ideas behind regularization-by-denoising~\cite{red} and extends them to enable semi-supervised learning. 
The intuition behind SUD is that a pretrained image denoiser, $D_\sigma(u)$, (which can be trained using the set $U_x$ of unpaired images) encodes strong priors on the distribution $p_x$. SUD enforces that the network's reconstructions $f_\theta(y_u)$ on the unpaired training data are ``consistent'' with the priors encoded in the denoiser.

When used in combination with an $\ell_2$ loss and without temporal-ensembling/damping, SUD effectively minimizes
\begin{align}
\mathcal{L}_{\text{paired}}+\lambda_1\mathcal{L}_{\text{denoiser}},
\end{align}
where $\lambda_1$ is a tuning parameter and
\begin{align}\label{eqn:denoiserloss}
    \mathcal{L}_{\text{denoiser}}=\frac{1}{|U_y|}\sum_{y_u\in U_y}\|f_\theta(y_u)-z_u \|^2,
\end{align}
where $z_u=D_\sigma(f_\theta(y_u))$. 
When updating the network weights $\theta$ to minimize~\eqref{eqn:denoiserloss}, SUD treats $z_u$ as a fixed pseudo-label and does not propagate gradients through the denoiser. That is, SUD defines the gradient of $\mathcal{L}_{\text{denoiser}}$ with respect to a single reconstruction $f_\theta(y_u)$ as
\begin{align}
\label{eqn:SUDGrads}
    \nabla_{f_\theta(y_u)}\mathcal{L}_{\text{denoiser}}=\frac{2[f_\theta(y_u)-D_\sigma(f_\theta(y_u)) ]}{|U_y|}.
\end{align}

As demonstrated in~\cite{sud}, SUD is a powerful and effective semi-supervised learning technique in the context of medical segmentation, where the goal is to map an image to a discrete-valued segmentation map. Using only a handful paired images and segmentation maps, Young et al.~were able to train a denoiser to segment brains, kidneys, and tumors.

Unfortunately, we found that without modification, SUD, with or without temporal ensembling, was far less effective at general image restoration tasks. As illustrated in Figure~\ref{fig:mode_collapse}, minimizing the SUD loss for CelebA face inpainting leads to mode collapse.

\subsection{Understanding SUD}

In this section we analyze SUD in order to identify and overcome its weaknesses.

\begin{theorem}
\label{thm:CrossEntropy}
When $D_\sigma$ is a minimum-mean-squared error (MMSE) Gaussian denoiser, minimizing $\mathcal{L}_{\text{denoiser}}$ minimizes the cross entropy between the distribution of $f_\theta(y_u)$ and the smoothed version of $p_x$. 
\end{theorem}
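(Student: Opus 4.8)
\emph{Proof idea.} The plan is to show that a single SUD update on $\mathcal{L}_{\text{denoiser}}$ is, up to a positive multiplicative constant, a stochastic gradient step on the cross-entropy between the pushforward distribution $q_\theta := (f_\theta)_\#\,p_y$ of the network's outputs and the Gaussian-smoothed density $p_\sigma := p_x * \mathcal{N}(0,\sigma^2 I)$; since the two objectives then have proportional parameter gradients, gradient-based minimization of one coincides with that of the other. The engine of the argument is Tweedie's formula (the Miyasawa/Robbins identity): if $D_\sigma$ is the MMSE denoiser for the corruption $u = x + \sigma\varepsilon$ with $x \sim p_x$, $\varepsilon \sim \mathcal{N}(0,I)$, then $D_\sigma(u) = \mathbb{E}[x \mid u] = u + \sigma^2\,\nabla\log p_\sigma(u)$. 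So first I would recall this identity and observe that $p_\sigma$ is precisely the ``smoothed version of $p_x$'' named in the statement, and that it is differentiable since it is a Gaussian convolution.

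Next I would substitute the identity into the SUD per-sample gradient \eqref{eqn:SUDGrads}, obtaining
\begin{align}
\nabla_{f_\theta(y_u)}\mathcal{L}_{\text{denoiser}}
&= \frac{2}{|U_y|}\big[f_\theta(y_u)-D_\sigma(f_\theta(y_u))\big] \notag\\
&= -\frac{2\sigma^2}{|U_y|}\,\nabla\log p_\sigma\big(f_\theta(y_u)\big).
\end{align}
It is essential here that SUD freezes $z_u = D_\sigma(f_\theta(y_u))$ as a pseudo-label and does not backpropagate through $D_\sigma$: this is exactly what makes the update equal the score of $p_\sigma$, rather than a term contaminated by the Jacobian of $D_\sigma$. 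Propagating this through $f_\theta$ gives the full parameter gradient $\nabla_\theta\mathcal{L}_{\text{denoiser}} = -\frac{2\sigma^2}{|U_y|}\sum_{y_u\in U_y}(\partial_\theta f_\theta(y_u))^T\,\nabla\log p_\sigma(f_\theta(y_u))$.

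Then I would write the cross-entropy as $H(q_\theta,p_\sigma) = -\mathbb{E}_{u\sim q_\theta}[\log p_\sigma(u)] = -\mathbb{E}_{y_u\sim p_y}[\log p_\sigma(f_\theta(y_u))]$, replace the outer expectation by the empirical average over $U_y$, and differentiate with the chain rule to get $\nabla_\theta H \approx -\frac{1}{|U_y|}\sum_{y_u\in U_y}(\partial_\theta f_\theta(y_u))^T\,\nabla\log p_\sigma(f_\theta(y_u))$. Comparing with the expression above shows the two coincide up to the positive factor $2\sigma^2$, so descending $\mathcal{L}_{\text{denoiser}}$ descends $H(q_\theta,p_\sigma)$, which is the claim.

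The main obstacle I anticipate is not a single calculation but pinning down the \emph{sense} in which minimizing $\mathcal{L}_{\text{denoiser}}$ minimizes the cross-entropy: the identification holds at the level of gradients (shared descent directions and stationary points, in the large-sample limit $|U_y|\to\infty$), and one must state clearly that $D_\sigma$ is taken to be exactly MMSE-optimal so Tweedie's formula applies without error. I would also note that this very identification exposes SUD's failure modes used in the next subsection: $\nabla\log p_\sigma$ carries no term rewarding high entropy of $q_\theta$, so nothing in the objective prevents the reconstructions from collapsing onto a single mode of $p_\sigma$ — which is what motivates the three fixes that follow.
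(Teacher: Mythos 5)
Your proposal is correct and follows essentially the same route as the paper's proof: apply Tweedie's formula to identify the frozen-pseudo-label SUD residual $f_\theta(y_u)-D_\sigma(f_\theta(y_u))$ with $-\sigma^2\nabla\log p_{x+\nu}(f_\theta(y_u))$, and match this against the Monte-Carlo gradient of $H(p_{f_\theta(y)},p_{x+\nu})$ over $U_y$, so the two objectives agree up to a positive constant. Your additional care about the parameter-level chain rule and the precise ``same gradients, hence shared descent directions'' sense of the claim is a slight refinement of, not a departure from, the paper's argument.
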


\begin{proof}
Let $\nu$ follow an independent zero-mean white Gaussian distribution with variance $\sigma^2$. 
We will use $\nu$ to smooth the distributions $p_x$ (recall $p_{x+\nu}=p_x*p_{\nu}$, where $*$ denotes convolution) so that we can take advantage of Tweedie's Formula, as described below.%

The cross entropy between $p_{f_\theta(y)}$ and $p_{x+\nu}$ is, by definition, 
\begin{align}
\label{eqn:CrossEntropy}
H(p_{f_\theta(y)},p_{x+\nu})&=-\mathbb{E}_{f_\theta(y)}[\ln p_{x+\nu}(f_\theta(y))].
\end{align}

We can form a Monte-Carlo approximate of the expectation in~\eqref{eqn:CrossEntropy} by averaging over $U_y$:
\begin{align}
	H(p_{f_\theta(y)},p_{x+\nu})\approx - \frac{1}{|U_y|}\sum_{y_u\in U_y} \ln p_{x+\nu}(f_\theta(y_u)).
\end{align}

Then, we can express the gradient of this loss with respect to a reconstruction $f_\theta(y_u)$ as
\begin{align}
\label{eqn:lossgrad1}
	&\nabla_{f_\theta(y_u)} H(p_{f_\theta(y)},p_{x+\nu})\nonumber\\
 &\approx - \frac{\nabla_{f_\theta(y_u)} \ln p_{x+\nu}(f_\theta(y_u))}{|U_y|}.
\end{align}

To efficiently evaluate~\eqref{eqn:lossgrad1} we turn to Tweedie's Formula. Tweedie's Formula~\cite{tweedie} states that for a signal corrupted with zero-mean additive white Gaussian noise, $r=x+\nu$ where $\nu\sim\mathcal{N}(0,\sigma^2\mathbf{I})$, the output of a MMSE denoiser $D_{\sigma}(\cdot)$ (and by extension a neural network trained to act as a MMSE denoiser) can be expressed as
\begin{equation}
    D_{\sigma}(r)=r+\sigma^2\nabla_r\ln p_{x+\nu}(r).    
    \label{eq:tweedie}
\end{equation}

In other words, denoisers perform gradient ascent on the log-likelihood of $p_{x+\nu}$ %
where the step size corresponds to the noise variance. Accordingly, we can express the gradient of the log-likelihood in terms of the denoiser's residual:
\begin{align}
	\nabla_r\ln p_{x+\nu}(r)=\frac{D_{\sigma}(r)-r}{\sigma^2}.
\end{align}

By applying Tweedie's formula to~\eqref{eqn:lossgrad1} we arrive at
\begin{align}
\label{eqn:lossgradDenoiser}
	\nabla_{f_\theta(y_u)} H(p_{f_\theta(y)},p_{x+\nu})\approx \frac{[f_\theta(y_u)-D_\sigma(f_\theta(y_u))]}
	{\sigma^2|U_y|}.
\end{align}

Up to constants, this is the same expression as the SUD gradients defined in~\eqref{eqn:SUDGrads}. As such, the SUD denoiser loss minimizes the cross entropy between $p_{f_\theta(y)}$ and $p_{x+\nu}$.
\end{proof}

\begin{figure}
    \centering
    \vskip 0.3in
    \setlength\tabcolsep{1pt}
    \begin{tabular}{cccccc}
      \multirow{1}{*}[0.9cm]{\rotatebox{90}{w/ $\mathcal{L}_{\text{reg}}$}}&\tikzmark{start}\begin{overpic}[width=0.18\columnwidth]{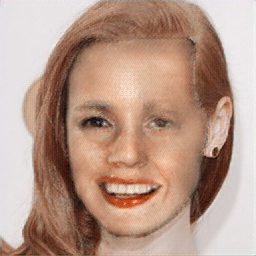}\end{overpic} &
      \begin{overpic}[width=0.18\columnwidth]{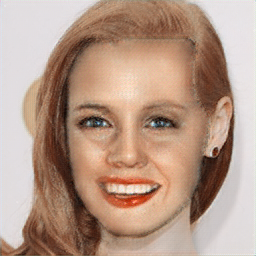}\end{overpic} &
      \begin{overpic}[width=0.18\columnwidth]{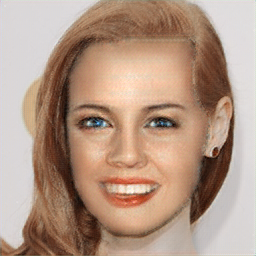}\end{overpic} &
      \begin{overpic}[width=0.18\columnwidth]{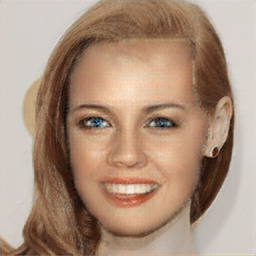}\end{overpic} &
      \begin{overpic}[width=0.18\columnwidth]{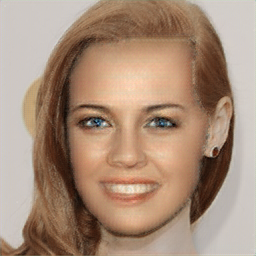}\end{overpic}\tikzmark{end}\\
      & Epoch 10 & Epoch 20 & Epoch 30 & Epoch 40 & Epoch 50
    \end{tabular}
    \begin{tikzpicture}[remember picture,overlay]
    \draw[-latex] ([ shift={(0,10ex)}]pic cs:start) -- node[above] {Training duration} ([ shift={(0,10ex)}]pic cs:end);
    \end{tikzpicture}
    \vskip 0.1in
    \caption{{\bf SUD with and without correlation minimization.} Without a correlation loss to regularize reconstructions, the SUD denoising objective causes network outputs to converge to a mode---in this case a washed out image.}
    \label{fig:mode_collapse}
\end{figure}

\begin{corollary}
    \label{corr:mode_collapse}
    Minimizing $\mathcal{L}_{\text{denoiser}}$ encourages mode collapse.
\end{corollary}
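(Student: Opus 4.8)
The plan is to leverage Theorem~\ref{thm:CrossEntropy} directly: since minimizing $\mathcal{L}_{\text{denoiser}}$ is (up to constants) minimizing the cross entropy $H(p_{f_\theta(y)}, p_{x+\nu})$, I would analyze what happens when the network has enough capacity to place $p_{f_\theta(y)}$ wherever it likes. The key observation is that cross entropy is \emph{not} a divergence: $H(q,p) = -\mathbb{E}_q[\ln p]$ is minimized over $q$ (with no constraint tying $q$ to $p$) by concentrating all mass of $q$ on the global maximizer(s) of $p$. So I would first write $H(p_{f_\theta(y)}, p_{x+\nu}) = -\int p_{f_\theta(y)}(x)\ln p_{x+\nu}(x)\,dx$ and note that, holding the target $p_{x+\nu}$ fixed, this integral is bounded below by $-\max_x \ln p_{x+\nu}(x)$, with equality iff $p_{f_\theta(y)}$ is supported on $\arg\max_x p_{x+\nu}(x)$. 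Under a generic assumption that the smoothed density $p_{x+\nu}$ has a unique mode $x^\star$ (true, e.g., for a log-concave component, and approximately true for unimodal-ish image priors after Gaussian smoothing), the unique minimizer is the Dirac mass $\delta_{x^\star}$ — i.e.\ total mode collapse.

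The steps, in order: (1) invoke Theorem~\ref{thm:CrossEntropy} to replace $\mathcal{L}_{\text{denoiser}}$ by $H(p_{f_\theta(y)}, p_{x+\nu})$ as the effective objective seen by the reconstructions; (2) recall that $\mathcal{L}_{\text{denoiser}}$ treats $z_u$ as a fixed pseudo-label, so there is no entropy-of-$p_{f_\theta(y)}$ term and nothing penalizing degeneracy of the output distribution — contrast this with, say, a KL objective $\mathrm{KL}(p_{f_\theta(y)}\,\|\,p_{x+\nu}) = H(p_{f_\theta(y)},p_{x+\nu}) - H(p_{f_\theta(y)})$, whose $-H(p_{f_\theta(y)})$ term is exactly what would prevent collapse; (3) solve the variational problem $\min_q -\mathbb{E}_q[\ln p_{x+\nu}]$ over probability measures $q$, obtaining that any minimizer is supported on the mode set of $p_{x+\nu}$, hence a point mass under uniqueness of the mode; (4) translate back: since $p_{f_\theta(y)}$ is the pushforward of $p_y$ under $f_\theta$, the minimizing regime is $f_\theta(y_u) \equiv x^\star$ for (almost) all $y_u$, which is precisely mode collapse. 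I would also remark, consistently with Figure~\ref{fig:mode_collapse}, that in practice the gradient~\eqref{eqn:lossgradDenoiser} points every reconstruction toward a common high-density point (a ``washed-out'' average-like image), so the dynamics, not just the optimum, drive collapse.

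The main obstacle is rigor around ``mode collapse'': the clean statement requires the smoothed target $p_{x+\nu}$ to have a unique global maximizer, which need not hold for a multimodal natural-image prior even after convolution with a Gaussian. I would handle this by stating the result at the level the paper needs — namely that the cross-entropy objective has \emph{no} term rewarding output diversity and is minimized by distributions collapsed onto the mode set of $p_{x+\nu}$, so in the generic unique-mode case this is a single image — and by pointing to Figure~\ref{fig:mode_collapse} for the empirical manifestation. A secondary subtlety is that the Monte-Carlo/Tweedie argument of Theorem~\ref{thm:CrossEntropy} characterizes the \emph{gradient} supplied to each reconstruction rather than a loss on the full output distribution; I would note that this is actually the stronger statement for our purposes, since it shows the per-sample update has no repulsion between distinct reconstructions and therefore cannot counteract collapse. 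This motivates the correlation-minimization term $\mathcal{L}_{\text{reg}}$ introduced next, which supplies the missing diversity/entropy-like penalty.
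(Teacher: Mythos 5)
Your proposal follows essentially the same route as the paper's own (two-sentence) proof: invoke Theorem~\ref{thm:CrossEntropy} to identify the effective objective as the cross entropy $H(p_{f_\theta(y)},p_{x+\nu})$, then observe that minimizing cross entropy over the output distribution with the target fixed drives it to a Dirac mass at a mode of $p_{x+\nu}$. Your added care about the mode set, the unique-mode caveat, and the per-sample gradient/dynamics view is a sound elaboration of the same argument, not a different approach.
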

\begin{proof}
Minimizing $\mathcal{L}_{\text{denoiser}}$ minimizes the cross entropy between $p_{f_\theta(y_u)}$ and $p_{x+\nu}$. The cross entropy of $H(p,q)$ of two distributions $p$ and $q$ is minimized with respect to $p$ when $p$ is a dirac distribution with a non-zero support where distribution $q$ is largest, i.e.,~a mode.
\end{proof}

Examples of SUD leading to mode collapse are presented in Figure~\ref{fig:mode_collapse}.

\begin{corollary}
    \label{corr:blurry_recons}
    Minimizing $\mathcal{L}_{\text{denoiser}}$ can encourage blurry reconstructions.
\end{corollary}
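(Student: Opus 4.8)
The plan is to chain together the two results already in hand. Theorem~\ref{thm:CrossEntropy} identifies the effective target of $\mathcal{L}_{\text{denoiser}}$ as the $\sigma$-smoothed distribution $p_{x+\nu}=p_x*p_\nu$, and Corollary~\ref{corr:mode_collapse} says that cross-entropy minimization drives the output distribution $p_{f_\theta(y)}$ toward a point mass at a mode of that target. So it suffices to show that a mode of $p_{x+\nu}$ is, in general, a blurry image, and that the blur worsens with the denoiser noise level $\sigma$.

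First I would characterize a mode $r^\star\in\arg\max_r p_{x+\nu}(r)$ analytically. At an interior maximizer the score vanishes, $\nabla_r\ln p_{x+\nu}(r^\star)=0$, so Tweedie's formula~\eqref{eq:tweedie} gives $D_\sigma(r^\star)=r^\star$: the mode is a fixed point of the MMSE denoiser. Since the MMSE denoiser is the posterior mean, $D_\sigma(r)=\mathbb{E}[x\mid x+\nu=r]$, the mode satisfies $r^\star=\mathbb{E}[x\mid x+\nu=r^\star]$ --- it is the average of all clean images from $p_x$ that plausibly explain the observation $r^\star$ under the Gaussian noise model.

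Next I would argue this average is blurry whenever $\sigma$ is comparable to or larger than the spread of $p_x$. The conditional law $p(x\mid x+\nu=r^\star)$ has likelihood $\propto\exp(-\|r^\star-x\|^2/2\sigma^2)$, so for moderate-to-large $\sigma$ it assigns comparable weight to many visually distinct images in the support of $p_x$; averaging such images causes their mutually uncorrelated high-frequency content (edges, fine texture) to cancel while shared low-frequency structure survives --- the ``average face'' phenomenon. To make this precise, fix a high-pass projector $Q=I-P_{\text{low}}$; since $r^\star=\mathbb{E}[x\mid x+\nu=r^\star]$ and $Q$ is linear, Jensen's inequality gives $\|Qr^\star\|^2\le\mathbb{E}[\,\|Qx\|^2\mid x+\nu=r^\star\,]$, with the gap equal to the trace of the conditional covariance of $Qx$, a quantity that grows with $\sigma^2$. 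Hence the mode carries strictly less high-frequency energy than a typical sample of $p_x$, i.e.\ it is blurry, increasingly so as $\sigma$ grows.

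The main obstacle is that the statement is inherently qualitative: whether the mode is genuinely ``blurry'' depends on $\sigma$ and on how multimodal and spread out $p_x$ is, and an \emph{individual} draw of $p_{x+\nu}$ is noisy rather than blurry --- it is specifically the mode-collapse behavior of Corollary~\ref{corr:mode_collapse} that converts the smoothing into blur. I would therefore frame the proof as establishing this tendency (``can encourage''), with the fixed-point / posterior-mean identity $r^\star=\mathbb{E}[x\mid x+\nu=r^\star]$ as the rigorous core and the high-frequency-cancellation estimate stated as a controlled inequality in terms of the conditional covariance rather than a sharp bound.
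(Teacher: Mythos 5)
Your proof is correct and rests on the same underlying mechanism as the paper's, but you develop it considerably further. The paper's own argument stops after the first step you take: minimizing $\mathcal{L}_{\text{denoiser}}$ drives reconstructions toward maximizers of the smoothed density $p_{x+\nu}$, and for sufficiently large $\sigma$ these maximizers lie not on the support of $p_x$ but ``in between'' high-probability points --- a claim the paper supports only with the one-dimensional bimodal toy example of Figure~\ref{fig:ToyNoise}, with no analytic characterization of what such an in-between point looks like or why it is blurry. Your route makes that missing step rigorous: using Tweedie's formula~\eqref{eq:tweedie}, an interior maximizer $r^\star$ of $p_{x+\nu}$ has vanishing score and is therefore a fixed point of the MMSE denoiser, i.e.\ $r^\star=\mathbb{E}[x\mid x+\nu=r^\star]$, a posterior average of clean images; the Jensen/variance-decomposition step, $\|Qr^\star\|^2\le\mathbb{E}[\|Qx\|^2\mid x+\nu=r^\star]$ with gap $\operatorname{tr}\operatorname{Cov}(Qx\mid x+\nu=r^\star)$, then quantifies the loss of high-frequency energy that the paper only gestures at. Two small caveats: your assertion that the conditional-covariance gap ``grows with $\sigma^2$'' is plausible but not proved (it is where the paper's ``for sufficiently large $\sigma$'' hypothesis re-enters), and your comparison baseline is the conditional second moment of $Qx$ rather than a typical draw from $p_x$; both are acceptable given that the corollary only claims the loss \emph{can} encourage blur, and you flag this qualitative framing explicitly. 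Net effect: same conclusion and same conceptual chain through Theorem~\ref{thm:CrossEntropy} and Corollary~\ref{corr:mode_collapse}, but your fixed-point/posterior-mean argument is a genuinely sharper justification than the paper's toy-example appeal.
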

\begin{proof}
Minimizing $\mathcal{L}_{\text{denoiser}}$ minimizes the cross entropy between $p_{f_\theta(y_u)}$ and $p_{x+\nu}$ and will result in solutions $f_\theta(y_u)$ that maximize $p_{x+\nu}(f_\theta(y_u))$. %
For sufficiently large $\sigma$, $p_{x+\nu}$ is maximized not where $p_x$ is large (along the manifold of natural images) but rather at some point in between high-probability points. 
\end{proof}

A toy example illustrating how introducing noise onto a random variable can move its distribution's maxima is presented in Figure~\ref{fig:ToyNoise}.

\begin{figure}[t]
\centering
\setlength\tabcolsep{1pt}
\begin{tabular}{ccc}
  \begin{overpic}[width=0.33\columnwidth]{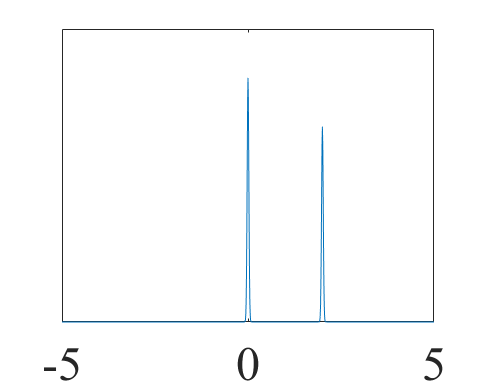}\end{overpic} &
  \begin{overpic}[width=0.33\columnwidth]{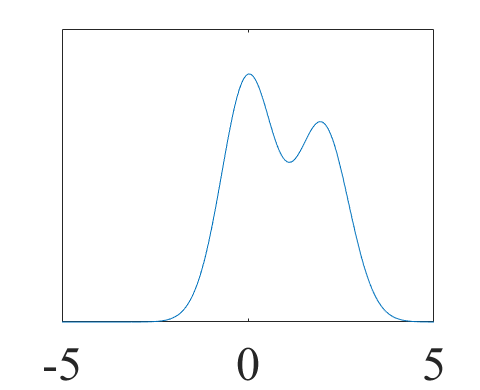}\end{overpic} &
  \begin{overpic}[width=0.33\columnwidth]{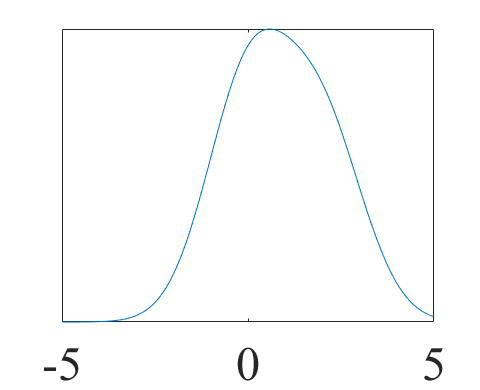}\end{overpic}\\
  $p_x$ & $p_{x+\nu_1}$ &  $p_{x+\nu_2}$
\end{tabular}
\vskip 0.1in
\caption{{\bf PDFs of a bimodal distribution with varying amounts of noise.} Adding sufficient noise can move the distribution's mode.}
\label{fig:ToyNoise}
\end{figure}

\subsection{Improving SUD}
To fight mode collapse, we introduce an additional penalty, $\mathcal{L}_{\text{reg}}$ into the SUD loss which encourages diverse outputs. That is, we minimize
\begin{align}
    \mathcal{L}_{\text{paired}}+\lambda_1\mathcal{L}_{\text{denoiser}}+\lambda_2\mathcal{L}_{\text{reg}}
\end{align}
where $\lambda_1$ and $\lambda_2$ are scalar weights on each loss term. 

\begin{theorem}
    When no paired training data is present, SUD with $\lambda_1=\frac{1}{2\sigma^2}$, $\lambda_2=1$, and $\mathcal{L}_{\text{reg}}=-H(p_{f_\theta(y)})$ minimizes the KL divergence between the distributions of $f_\theta(y)$ and $x+\nu$.
\end{theorem}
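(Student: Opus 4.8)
The plan is to recognize this as essentially a corollary of Theorem~\ref{thm:CrossEntropy} together with the elementary identity decomposing KL divergence into cross entropy minus entropy. Concretely, for any two densities $p,q$ one has $D_{\mathrm{KL}}(p\,\|\,q)=H(p,q)-H(p)$, so taking $p=p_{f_\theta(y)}$ and $q=p_{x+\nu}$ gives
\begin{align}
D_{\mathrm{KL}}\!\left(p_{f_\theta(y)}\,\|\,p_{x+\nu}\right)=H\!\left(p_{f_\theta(y)},p_{x+\nu}\right)-H\!\left(p_{f_\theta(y)}\right).
\end{align}
It therefore suffices to show that, once the paired loss is dropped and the prescribed multipliers are inserted, the SUD objective reproduces exactly these two terms (in the same gradient sense used in Theorem~\ref{thm:CrossEntropy}).

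First I would treat the cross-entropy term. Theorem~\ref{thm:CrossEntropy}, and in particular the Tweedie-based gradient identity~\eqref{eqn:lossgradDenoiser}, shows that the Monte-Carlo gradient of $H(p_{f_\theta(y)},p_{x+\nu})$ with respect to a single reconstruction $f_\theta(y_u)$ is $[f_\theta(y_u)-D_\sigma(f_\theta(y_u))]/(\sigma^2|U_y|)$, whereas the SUD gradient~\eqref{eqn:SUDGrads} is $2[f_\theta(y_u)-D_\sigma(f_\theta(y_u))]/|U_y|$. The two differ only by the scalar $2\sigma^2$, so scaling $\mathcal{L}_{\text{denoiser}}$ by $\lambda_1=\tfrac{1}{2\sigma^2}$ makes the per-output gradients coincide, and hence (by the chain rule through $\theta$) makes $\lambda_1\mathcal{L}_{\text{denoiser}}$ and $H(p_{f_\theta(y)},p_{x+\nu})$ induce the same gradient flow on the network weights. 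The entropy term requires no work: with $\lambda_2=1$ the regularizer contributes exactly $\mathcal{L}_{\text{reg}}=-H(p_{f_\theta(y)})$, matching the second summand of the KL decomposition verbatim.

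Combining, and using $\mathcal{L}_{\text{paired}}=0$ since no paired data is present, the total objective $\lambda_1\mathcal{L}_{\text{denoiser}}+\lambda_2\mathcal{L}_{\text{reg}}$ has weight-gradient agreeing (up to the Monte-Carlo approximation already inherited from Theorem~\ref{thm:CrossEntropy}) with $\nabla_\theta\big[H(p_{f_\theta(y)},p_{x+\nu})-H(p_{f_\theta(y)})\big]=\nabla_\theta D_{\mathrm{KL}}(p_{f_\theta(y)}\,\|\,p_{x+\nu})$. Thus performing (stochastic) gradient descent on the SUD loss with these parameters performs gradient descent on the KL divergence between $p_{f_\theta(y)}$ and $p_{x+\nu}$, which is the claim.

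The main delicacy I anticipate is bookkeeping at the level of \emph{gradients} rather than loss \emph{values}: the correspondence from Theorem~\ref{thm:CrossEntropy} holds only up to additive constants and a sampling approximation, and $H(p_{f_\theta(y)})$ is the entropy of a pushforward distribution that one never evaluates in closed form. So the statement should be read as ``the SUD parameter update descends the KL divergence,'' and the write-up should phrase the equalities accordingly (matching descent directions, not scalar objective values). I would also remark---though it lies outside this theorem---that this is precisely why the subsequent section replaces the idealized $-H(p_{f_\theta(y)})$ by a tractable sample-correlation surrogate.
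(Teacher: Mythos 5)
Your proposal is correct and follows essentially the same route as the paper: invoke Theorem~\ref{thm:CrossEntropy} to identify $\frac{1}{2\sigma^2}\mathcal{L}_{\text{denoiser}}$ with the cross entropy $H(p_{f_\theta(y)},p_{x+\nu})$ at the gradient level, add $\mathcal{L}_{\text{reg}}=-H(p_{f_\theta(y)})$, and recognize the sum as the KL divergence. Your explicit tracking of the $2\sigma^2$ scaling and the gradient-versus-value caveat is a welcome refinement of the paper's terser argument, but the underlying proof is the same.
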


\begin{proof}
Our results from Theorem~\ref{thm:CrossEntropy} indicate  $\frac{1}{2\sigma^2}\mathcal{L}_{\text{denoiser}}$ and $H(p_{f_\theta(y)},p_{x+\nu})$ have the same gradients. Thus minimizing 
\begin{align}
    \frac{1}{2\sigma^2}\mathcal{L}_{\text{denoiser}}+\mathcal{L}_{\text{reg}}
\end{align}
minimizes 
\begin{align}
 H(p_{f_\theta(y)},p_{x+\nu})-H(p_{f_\theta(y)}).
 \end{align}
The latter expression is the definition of KL-divergence between the distributions of $f_\theta(y)$ and $x+\nu$.
\end{proof}

\subsubsection{Correlation minimization}
Computing and maximizing the entropy $H(p_{f_\theta(y_u)})$ is computationally intractable. 
To get around this hurdle, we instead encourage sample diversity as follows: 
(1) we use a U-net architecture as our reconstruction network $f_\theta(\cdot)$,
(2) we let $a_u$ represent the intermediate activation of $f_\theta$ to inputs $y_u$, and
(3) we then penalize correlations between activations. 

Specifically, in each mini-batch, we compute the normalized covariance matrix on the intermediate outputs from the encoder block of our network. Diagonal entries of the matrix---which contain the correlation coefficient of each vector with itself---are all equal to 1 by definition. Non-diagonal entries of the matrix contain the Pearson correlation coefficient (PCC) between the latent vectors in a mini-batch. 

PCC identifies linear correlations between the latent vectors, where a value of 0 signifies linearly uncorrelated samples, a value 1 signifies a positive linear correlation between samples, and a value of $-1$ signifies a negative linear correlation between samples. By minimizing this value for each entry in the correlation matrix, we encourage network to produce outputs which are uncorrelated in latent space.

\subsubsection{Noise injection}
\begin{figure}[t]
\setlength\tabcolsep{2pt}
\begin{tabular}{cccc}
  \multirow{1}{*}[10ex]{\rotatebox{90}{with noise}} &
  \begin{overpic}[width=0.29\columnwidth]{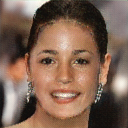}\end{overpic} &
  \begin{overpic}[width=0.29\columnwidth]{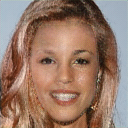}\end{overpic} &
  \begin{overpic}[width=0.29\columnwidth]{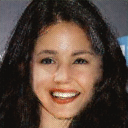}\end{overpic}\\
  \multirow{1}{*}[12ex]{\rotatebox{90}{without noise}} &
  \begin{overpic}[width=0.29\columnwidth]{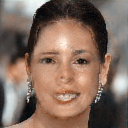}\end{overpic} &
  \begin{overpic}[width=0.29\columnwidth]{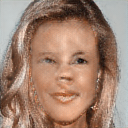}\end{overpic} &
  \begin{overpic}[width=0.29\columnwidth]{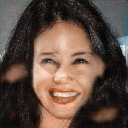}\end{overpic} 
\end{tabular}
\vskip 0.1in
\caption{{\bf SUD with and without noise injection.} Injecting noise onto the reconstructed images before denoising them results in more realistic-looking reconstructions.}
\label{fig:noise_ablation}
\end{figure}

SUD compares the distribution of the reconstructions $f_\theta(y)$ with the distribution $p_{x+\nu}$. As noted in the previous section, this encourages solutions for which $p_{x+\nu}$ is large but $p_x$ is small: i.e., it can produce solutions off of the manifold of natural images. 

To alleviate this problem, we inject noise onto the reconstructions $f_\theta(y_u)$ before passing them through the denoiser. That is, we redefine $\nabla_{f_\theta(y_u)}\mathcal{L}_{\text{denoiser}}$ as
\begin{align}
\label{eqn:SUDGrads2}
    \nabla_{f_\theta(y_u)}\mathcal{L}_{\text{denoiser}}=\frac{2[f_\theta(y_u)-D_\sigma(f_\theta(y_u)+\nu_2) ]}{|U_y|},
\end{align}
where $\nu_2\sim N(0,\sigma_2^2\mathbf{I})$.

This simple modification allows us to compare the smoothed distribution $p_{f_\theta(y)+\nu_2}$ with the smoothed distribution $p_{x+\nu}$. As demonstrated in Figure~\ref{fig:noise_ablation}, noise injection produces reconstructions of considerably higher quality.

\subsubsection{Diffusion models}
\label{sec:diffusion_models}
\begin{figure}[t]
\centering
\setlength\tabcolsep{2pt}
\begin{tabular}{cccc}
  \multirow{1}{*}[8ex]{\rotatebox[]{90}{DDPM}} &
  \begin{overpic}[width=0.29\columnwidth]{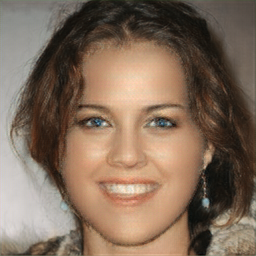}\end{overpic} &
  \begin{overpic}[width=0.29\columnwidth]{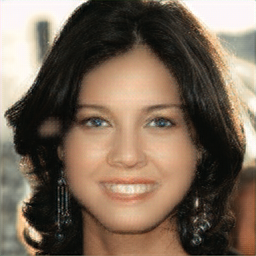}\end{overpic} &
  \begin{overpic}[width=0.29\columnwidth]{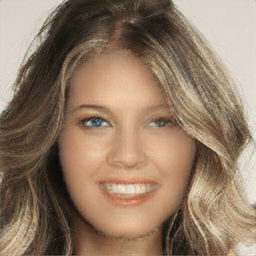}\end{overpic}\\
  \multirow{1}{*}[10ex]{\rotatebox[]{90}{AE}} &
  \begin{overpic}[width=0.29\columnwidth]{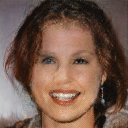}\end{overpic} &
  \begin{overpic}[width=0.29\columnwidth]{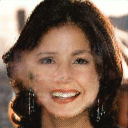}\end{overpic} &
  \begin{overpic}[width=0.29\columnwidth]{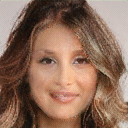}\end{overpic}
  \end{tabular}
\vskip 0.1in
\caption{{\bf SUD with and without denoising diffusion models.} Image in-painting samples generated by SUD with a multi-step denoising diffusion model (top) and a one-step autoencoder image denoiser (bottom). SUD training with the denoising diffusion model produces higher-quality reconstructions.}
\label{fig:diffusion_ablation}
\end{figure}

As mentioned in Section~\ref{sec:red}, an alternative and more heuristic interpretation of denoising algorithms is that they are projecting the reconstructions onto a manifold $\mathcal{M}$ of allowable reconstructions, e.g.,~faces or ``natural images''. Traditional denoising algorithms perform this projection in a single step. However, existing theory~\cite{wakin2005high} suggests that one should navigate image manifolds gradually, in a smooth-to-rough/coarse-to-fine manner.

Loosely inspired by this observation, we propose replacing our single-step MMSE denoising algorithm with a multi-step denoising diffusion probabilistic model (DDPMs)~\cite{ddpm}. 
That is, we replace our denoiser $D_\sigma(\cdot)$ used in~\eqref{eqn:SUDGrads2} with an iterative forward ``noising'' operator $F(\cdot)$ and an iterative reverse ``denoising'' operator $R(\cdot)$ such that $D_\sigma(f_{\theta}(y_u)+\nu)=R(F(f_{\theta}(y_u)))$. The definitions for each operator are expressed as
\begin{align}
    \label{eqn:DDPM}
    &\resizebox{0.9\columnwidth}{!}{$F(r)=\sqrt{\alpha_t}\left(\ldots\left(\sqrt{\alpha_0}r+(1-\alpha_0)z_0\right)+\ldots\right)+(1-\alpha_t)z_t$}\nonumber\\
    &\resizebox{0.97\columnwidth}{!}{$R(F(r))=\frac{1}{\sqrt{\alpha_0}}\left(\ldots\left(\frac{1}{\sqrt{\alpha_t}}\left(F(r) - \frac{1-\alpha_t}{\sqrt{1-\bar{\alpha}_t}}\epsilon_{\tau}\right)+\sqrt{1-\alpha_t}z_t\ldots\right)-\frac{1-\alpha_0}{\sqrt{1-\bar{\alpha}_0}}\epsilon_{\tau}\right)$}
\end{align}
where $\alpha$ controls the noise variance at each time step $t$, $z\sim\mathcal{N}(0,I)$, $\epsilon$ represents a diffusion network with weights $\tau$, and $\bar{\alpha}_t=\prod^t_i\alpha_i$.

Conceptually, the DDPM serves to first project $r$ onto the smooth manifold of noisy images and then gradually project $r$ onto correspondingly less smooth manifolds of less noisy images. 
As demonstrated in Figure~\ref{fig:diffusion_ablation}, training with a DDPM produces higher quality results than training with a denoising auto-encoder.

\subsection{SUD$^2$}
We refer to SUD, including all the aforementioned modifications, as SUD$^2$. Ablation studies characterizing the relative importance of correlation minimization, noise injection, and denoising diffusion models are provided in the supplement.

\section{Experiments}
\label{sec:experiments}

In the following experiments, we evaluate our method both quantitatively and qualitatively against baselines. All methods below are trained on 5 paired images, and the semi-supervised methods are trained on additional unpaired images. 

The backbone architecture for our image reconstruction network is a U-net~\cite{unet} consisting of 4 down-sampling and up-sampling blocks---implemented using strided convolutions---with skip connections between them. Each down/up-sampling block contains two convolutional layers, each with batch normalization and a Leaky ReLU activation function~\cite{relu}. The denoising diffusion models used in our experiments have a similar U-net architecture, albeit with 6 down/up-sampling blocks and spatial self-attention. Similarly, our denoising networks use an autoencoder backbone identical to the image reconstruction U-net, but with skip connections removed. 

We train all of the image reconstruction networks on $4\times$ Nvidia RTX A5000 GPUs using an Adam optimizer with an initial learning rate of $1\times10^{-3}$, a weight decay of $1\times10^{-4}$, and a batch size of 8 for 50 epochs. Additionally, we resize images to $256\times256$ resolution and normalize pixel intensities between $[-1,1]$ before training.

When evaluating our method on the experiments below, we set $\lambda_1=0.01, \lambda_2=10$ and supervise training by walking 400 steps along the forward diffusion process before walking 400 steps along the reverse diffusion process. Generally, we find $[300,700]$ steps to be a reasonable range to take. Traversing more than 700 steps along the diffusion chain adds excess amounts of noise to the image, resulting in an image which no longer resembles the input image. In contrast, traversal of less than 300 steps yields images which are approximately identical to the input images---making them poor targets for supervision.

Instead of explicitly setting the gradient of the loss, we equivalently implement~\eqref{eqn:SUDGrads2} by disabling backpropagation of gradients through the diffusion model $D_{\sigma}(\cdot)$ before computing the $\ell_2$ loss between $f_{\theta}(y_u)$ and $D_{\sigma}(f_{\theta}(y_u))$. We also add a perceptual loss---specifically Learned Perceptual Image Patch Similarity (LPIPS)~\cite{lpips}---to $\mathcal{L}_{\text{paired}}$ and $\mathcal{L}_{\text{denoiser}}$ as we empirically find it to help produce better quality reconstructions.

\subsection{Image in-painting}
\label{sec:inpainting}

\begin{figure*}[ht]
\centering
\subfigure{\includegraphics[width=0.29\columnwidth]{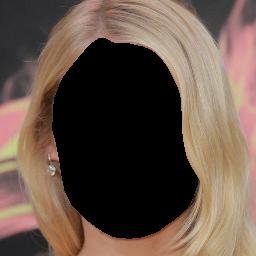}}
\subfigure{\begin{overpic}[width=0.29\columnwidth]{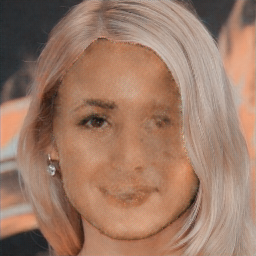}\end{overpic}}
\subfigure{\begin{overpic}[width=0.29\columnwidth]{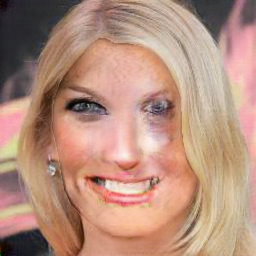}\end{overpic}}
\subfigure{\begin{overpic}[width=0.29\columnwidth]{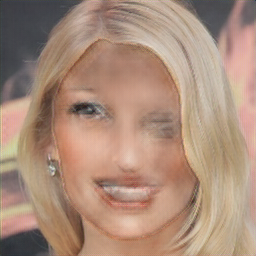}\end{overpic}}
\subfigure{\begin{overpic}[width=0.29\columnwidth]{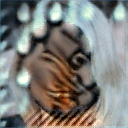}\end{overpic}}
\subfigure{\begin{overpic}[width=0.29\columnwidth]{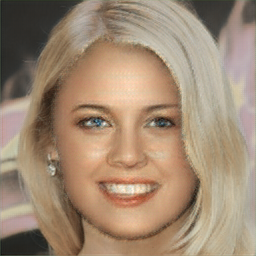}\end{overpic}}
\subfigure{\includegraphics[width=0.29\columnwidth]{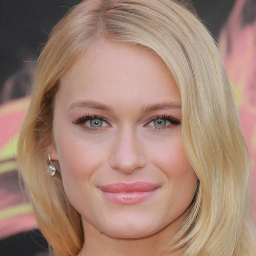}}\\
\vskip -0.1in
\subfigure{\includegraphics[width=0.29\columnwidth]{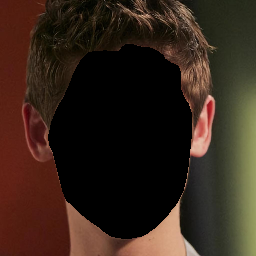}}
\subfigure{\begin{overpic}[width=0.29\columnwidth]{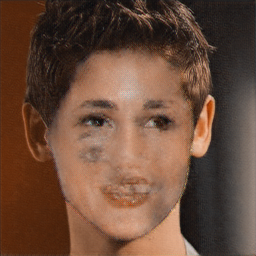}\end{overpic}}
\subfigure{\begin{overpic}[width=0.29\columnwidth]{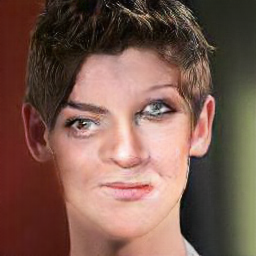}\end{overpic}}
\subfigure{\begin{overpic}[width=0.29\columnwidth]{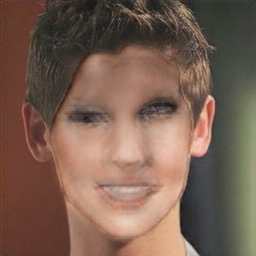}\end{overpic}}
\subfigure{\begin{overpic}[width=0.29\columnwidth]{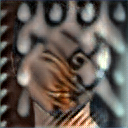}\end{overpic}}
\subfigure{\begin{overpic}[width=0.29\columnwidth]{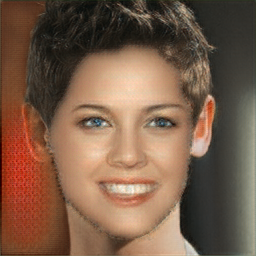}\end{overpic}}
\subfigure{\includegraphics[width=0.29\columnwidth]{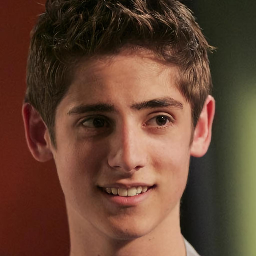}}\\
\vskip -0.1in
\setcounter{subfigure}{0}
\subfigure[Input]{\includegraphics[width=0.29\columnwidth]{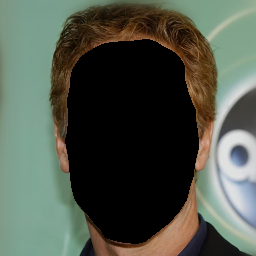}}
\subfigure[Supervised]{\begin{overpic}[width=0.29\columnwidth]{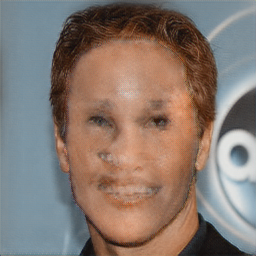}\end{overpic}}
\subfigure[CycleGAN]{\begin{overpic}[width=0.29\columnwidth]{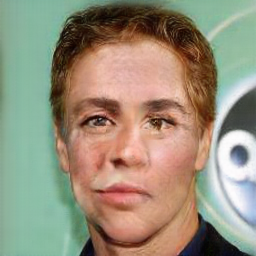}\end{overpic}}
\subfigure[CycleGAN-SSL]{\begin{overpic}[width=0.29\columnwidth]{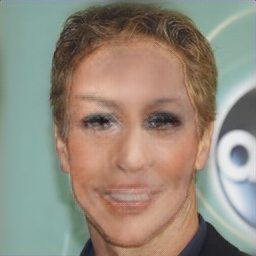}\end{overpic}}
\subfigure[SUD]{\begin{overpic}[width=0.29\columnwidth]{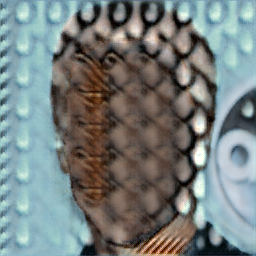}\end{overpic}}
\subfigure[$\text{SUD}^2$]{\begin{overpic}[width=0.29\columnwidth]{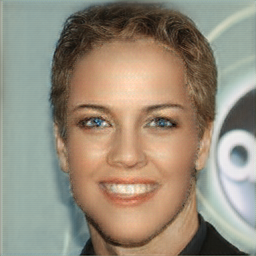}\end{overpic}}
\subfigure[Reference]{\includegraphics[width=0.29\columnwidth]{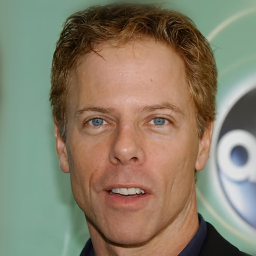}}\\
\vskip 0.1in
\caption{{\bf Qualitative in-painting results.} The images above show inference results from several methods on CelebAMask-HQ in-painting. All methods are trained on 5 pairs of masks and faces. Semi-supervised methods (CycleGAN-SSL, SUD, SUD$^2$) are trained on an additional 1000+12,500 unpaired faces.}
\label{ref:comparison_table}
\end{figure*}

Image in-painting is a generative process for reconstructing missing regions of an image such that restored image fits a desired---often natural---image distribution. We test our method on the CelebAMask-HQ dataset~\cite{CelebAMask-HQ}, which contains 30,000 images of $512\times512$ resolution and their corresponding segmentation maps. This dataset allows us to learn a strong prior since the image distribution is relatively constrained---all images are one-quarter head shots of celebrities. 

In this experiment, we mask out the subject's face from each image and train a few shot, semi-supervised in-painting network on 5 paired images and 1000+12,500 unpaired images. The 12,500 unpaired images are used to pre-train a blind Gaussian denoiser, a diffusion model, and a CycleGAN~\cite{CycleGAN2017}. Note that the denoiser and diffusion model are only trained on 12,500 faces without a mask applied. However, we train the CycleGAN on non-corresponding pairs of 12,500 faces without a mask applied and 12,500 faces with a mask applied to give it the best possible performance. As an additional baseline, we also train an image reconstruction network using the pre-trained CycleGAN as a pseudo-label generator, which we refer to as CycleGAN-SSL.

Despite requiring far fewer images than both CycleGAN baselines, SUD$^2$ produces faces which most closely resemble the ground truth distribution. Furthermore, the strong prior imposed by the pre-trained diffusion model results in reconstructions with more defined facial structure compared to baseline methods. Notably, as described in Corollary~\eqref{corr:mode_collapse}, the SUD baseline collapses to a mode during training with high probability, yielding highly correlated reconstructions. 

Quantitatively, SUD$^2$ achieves the most consistent results across our test set of $768$ images, with a $48\%$ higher average PSNR over CycleGAN-SSL and a $43\%$ lower average FID score compared to the supervised baseline. Although CycleGAN achieves the best FID score overall, it tends to produce qualitatively poor faces with disproportionately sized features, which is reflected in its poor PSNR, SSIM, and LPIPS scores. Likewise, while the supervised baseline achieves PSNR, SSIM, and LPIPS scores comparable to SUD$^2$, it often generates faces with missing features (i.e. eyes, nose, mouth), which is indicated by its high FID score.

\begin{figure}[t]
\centering
\begin{tabular}{lcccc}
  \hline
  \multicolumn{5}{c}{\textbf{Image in-painting}}\\
  \hline
  Method & PSNR$\uparrow$ & SSIM$\uparrow$ & LPIPS$\downarrow$ & FID$\downarrow$\\
  \hline
  Supervised & 18.44 & 0.71 & 0.29 & 0.48\\
  CycleGAN & 8.77 & 0.23 & 0.66 & \textbf{0.17}\\
  CycleGAN-SSL & 11.38 & 0.60 & 0.39 & 1.14\\
  SUD & 11.28 & 0.29 & 0.69 & 3.07\\
  $\text{SUD}^2$ (Ours)& \textbf{18.71} & \textbf{0.71} & \textbf{0.28} & 0.31\\
  \hline\\
\end{tabular}
\caption{\textbf{Quantitative image in-painting results.} This table lists the average scores attained by each method on the CelebAMask-HQ image in-painting test set. The best scores are bolded for readability.}
\label{table:inpainting_metrics}
\end{figure}

\begin{figure}[t]
\centering
\begin{tabular}{lccc}
  \hline
  \multicolumn{4}{c}{\textbf{Image dehazing}}\\
  \hline
  Method & PSNR$\uparrow$ & SSIM$\uparrow$ & LPIPS$\downarrow$\\
  \hline
  Supervised & 17.65 & \textbf{0.66} & 0.44\\
  CycleGAN & 12.80 & 0.22 & 0.64\\
  $\text{SUD}^2$ (Ours)& \textbf{17.74} & 0.64 & \textbf{0.37}\\
  \hline\\
\end{tabular}
\caption{\textbf{Quantitative image dehazing results.} This table lists the median scores attained by each method on the REVIDE image dehazing test set. The best scores are bolded for readability.}
\label{table:dehazing_metrics}
\end{figure}

\begin{figure*}[ht]
\centering
\subfigure{\includegraphics[width=0.40\columnwidth]{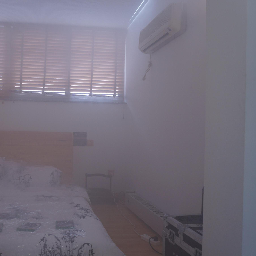}}
\subfigure{\begin{overpic}[width=0.40\columnwidth]{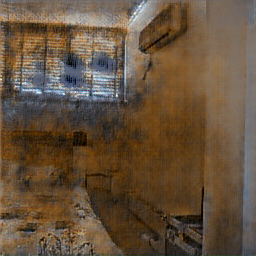}
    \put (5,5) {\scriptsize\textcolor{white}{\texttt{PSNR:16.53}}}
    \put (55,5) {\scriptsize\textcolor{white}{\texttt{SSIM:0.58}}}
\end{overpic}}
\subfigure{\begin{overpic}[width=0.40\columnwidth]{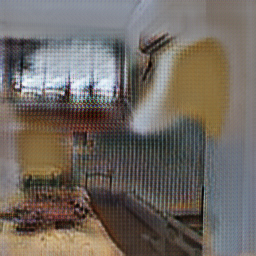}
    \put (5,5) {\scriptsize\textcolor{white}{\texttt{PSNR:15.68}}}
    \put (55,5) {\scriptsize\textcolor{white}{\texttt{SSIM:0.42}}}
\end{overpic}}
\subfigure{\begin{overpic}[width=0.40\columnwidth]{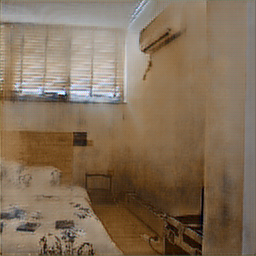}
    \put (5,5) {\scriptsize\textcolor{white}{\texttt{PSNR:17.76}}}
    \put (55,5) {\scriptsize\textcolor{white}{\texttt{SSIM:0.66}}}
\end{overpic}}
\subfigure{\includegraphics[width=0.40\columnwidth]{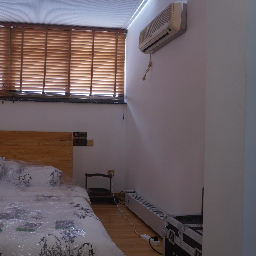}}\\
\vskip -0.1in
\subfigure{\includegraphics[width=0.40\columnwidth]{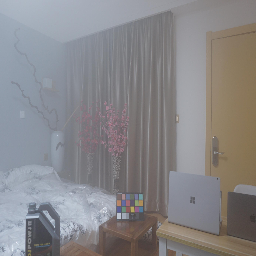}}
\subfigure{\begin{overpic}[width=0.40\columnwidth]{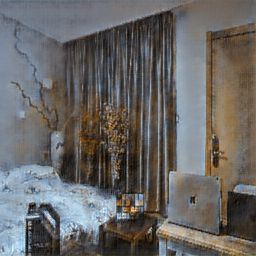}
    \put (5,5) {\scriptsize\textcolor{white}{\texttt{PSNR:21.70}}}
    \put (55,5) {\scriptsize\textcolor{white}{\texttt{SSIM:0.75}}}
\end{overpic}}
\subfigure{\begin{overpic}[width=0.40\columnwidth]{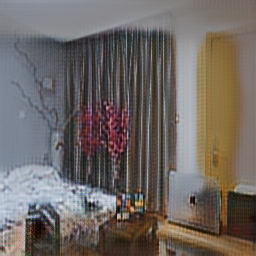}
    \put (5,5) {\scriptsize\textcolor{white}{\texttt{PSNR:19.66}}}
    \put (55,5) {\scriptsize\textcolor{white}{\texttt{SSIM:0.70}}}
\end{overpic}}
\subfigure{\begin{overpic}[width=0.40\columnwidth]{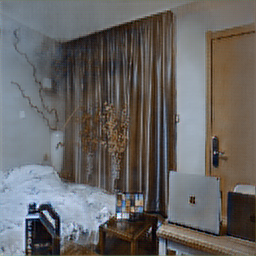}
    \put (5,5) {\scriptsize\textcolor{white}{\texttt{PSNR:22.01}}}
    \put (55,5) {\scriptsize\textcolor{white}{\texttt{SSIM:0.76}}}
\end{overpic}}
\subfigure{\includegraphics[width=0.40\columnwidth]{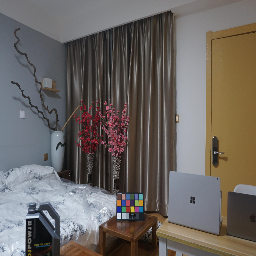}}\\
\vskip -0.1in
\setcounter{subfigure}{0}
\subfigure[Input]{\includegraphics[width=0.40\columnwidth]{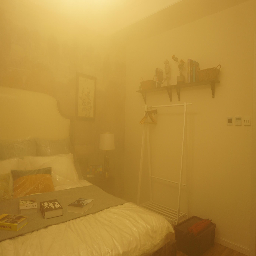}}
\subfigure[Supervised]{\begin{overpic}[width=0.40\columnwidth]{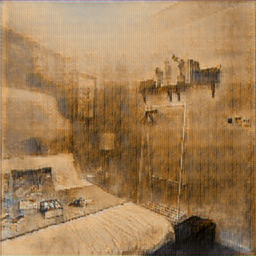}
    \put (5,5) {\scriptsize\textcolor{white}{\texttt{PSNR:16.00}}}
    \put (55,5) {\scriptsize\textcolor{white}{\texttt{SSIM:0.54}}}
\end{overpic}}
\subfigure[CycleGAN]{\begin{overpic}[width=0.40\columnwidth]{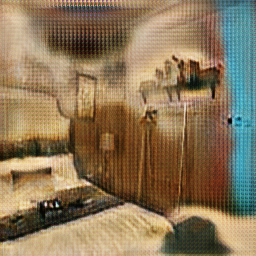}
    \put (5,5) {\scriptsize\textcolor{white}{\texttt{PSNR: 13.09}}}
    \put (55,5) {\scriptsize\textcolor{white}{\texttt{SSIM:0.35}}}
\end{overpic}}
\subfigure[$\text{SUD}^2$]{\begin{overpic}[width=0.40\columnwidth]{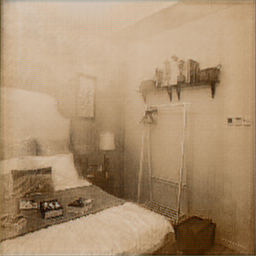}
    \put (5,5) {\scriptsize\textcolor{white}{\texttt{PSNR:14.14}}}
    \put (55,5) {\scriptsize\textcolor{white}{\texttt{SSIM:0.62}}}
\end{overpic}}
\subfigure[Reference]{\includegraphics[width=0.40\columnwidth]{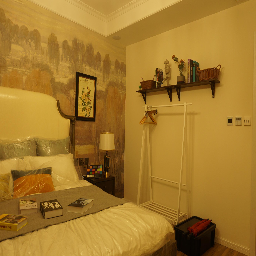}}\\
\vskip 0.1in
\caption{{\bf Dehazing results.} Inference results comparing our method against baselines on dehazing REVIDE bedrooms. All methods use the same network architecture (U-net). All methods are provided 5 hazy / clear image pairs, 200 unpaired hazy images.}
\label{ref:dehazing}
\end{figure*}

\subsection{Dehazing}
\label{sec:dehazing}

\subsubsection{Haze forward model}

Image dehazing is a challenging imaging inverse problem, where the objective is to remove degradations caused by small particulates in the air which obstruct visibility. Typically, dehazing methods approximate the hazy image formation as an depth-dependent attenuation process, defined by the well-known dichromatic atmospheric scattering model~\cite{nayar,vision_bad_weather}
\begin{equation}
    \label{eq:haze}
    I(x,y)=J(x,y)t(x,y)+A(1-t(x,y)).
\end{equation}
Each pixel in the hazy image $I$ is obtained by attenuating pixels in the clear image $J$ according to a transmission map $t$ dependent on depth $d$. Two additional scalar parameters $\beta$ and $A$ describe the magnitude of attenuation and intensity of atmospheric illumination respectively. 

Many prior works such as DehazeNet~\cite{dehazenet,cnn_dehaze} incorporate the atmospheric scattering model in a deep learning framework to train a dehazing network on synthesized hazy images. However, as shown by~\cite{O-HAZE_2018}, these methods often fail when applied to images of real haze---indicating a poor fit to the true forward model of haze.

\subsubsection{Removing real haze}

Currently, few methods exist for learning to dehaze exclusively on images of real haze. CycleDehaze~\cite{cycledehaze} trained on real hazy images from the O-HAZE~\cite{O-HAZE_2018} and I-HAZE~\cite{I-HAZE_2018} benchmarks; however, the low number of hazy samples---less than 100---in those datasets, compounded with the data-hungry nature of neural networks, inhibits the dehazing network from producing high-quality reconstructions.

In this experiment, we aim to overcome these limitations and evaluate our method on a subset of bedroom scenes from the REVIDE dataset~\cite{REVIDE}, which contains a total of 240 real hazy image pairs across 9 scenes. This dataset represents an ideal use case for our method, as only a restricted number of paired images are available to train on.

Similar to the in-painting experiment, we train a U-net on 5 paired images and 200+20,000 unpaired images, where the 5 paired and 200 unpaired images are sourced from REVIDE and the additional 20,000 unpaired images are sourced from the LSUN bedrooms dataset~\cite{lsun}. Before training the image reconstruction network, we first pre-train both a diffusion model and a CycleGAN. Specifically, we train the diffusion model on 20,000 non-hazy bedroom images from the LSUN dataset~\cite{lsun} and train CycleGAN on the same LSUN images (along with the 5 paired and 200 unpaired images from REVIDE). 

Compared to baselines methods, we find that SUD$^2$ produces reconstructions with far fewer artifacts and achieves a median LPIPS score $17\%$ lower than the supervised baseline. Visually, the dehazed images generated by SUD$^2$ have a more natural and smooth appearance relative to both CycleGAN and the supervised baseline as shown in Figure~\ref{ref:dehazing}. In contrast, due to the limited number of hazy training samples, CycleGAN fails to learn a good mapping between hazy and non-hazy images, resulting in poor PSNR, SSIM, and LPIPS scores.

\section{Limitations and future work}
Our experiments demonstrate that SUD$^2$ enables the use of deep learning techniques even when paired data is scarce. Nonetheless, there remain limitations to our approach which can be addressed in future works.

For instance, performance improvements gained from SUD$^2$ are directly correlated with the strength of the prior imposed by the unconditional denoising diffusion model. As the target image distribution grows more diverse, the strength of this learned prior weakens. Conditional diffusion models offer a promising avenue for maintaining a strong prior on diverse image distributions that can be explored in future works.

Furthermore, the reconstruction networks produced by the SUD$^2$ training procedure are deterministic. Extending SUD$^2$ to train stochastic algorithms which produce diverse outputs~\cite{kadkhodaie2021stochastic} is another interesting direction for future work.%

\section{Conclusion}
\label{sec:conclusion}

We introduce SUD$^2$, a generalized deep learning framework for solving few-shot, semi-supervised image reconstruction problems. Inspired by the recent success of denoising diffusion models on image generation tasks, we leverage diffusion models to regularize network training, encouraging solutions that lie close to the desired image distribution. 

To benchmark our method, we apply SUD$^2$ on image reconstruction tasks and compare against CycleGAN, a well-studied generative model that utilizes unpaired data to perform image-to-image translation. When applied to image in-painting, we find that our method produces significantly more structured faces than baselines, where each facial feature is proportionately sized and reasonably positioned on the face. Similarly, when applied to image dehazing, we observe that results from our method are far less noisy compared to baseline methods. Based off of both qualitative and quantitative experimental results, SUD$^2$ succeeds in enabling the training of deep networks on datasets where few paired data samples are available.

\section{Acknowledgements}
M.C. and C.M. were supported in part by the AFOSR Young Investigator Program Award FA9550-22-1-0208 and a Northrop Grumman seed grant.

{\small
\bibliographystyle{ieee_fullname}
\bibliography{main}
}

\newpage

\ifarXiv
    \foreach \x in {1,...,\numbersupplementpages}
    {
        \clearpage
        \includepdf[pages={\x}]{\supplementfilename}
    }
\fi

\end{document}